\documentclass[accepted]{uai2026} 

\usepackage[dvipsnames]{xcolor} 
\usepackage{booktabs}
\usepackage{multirow}
\usepackage[american]{babel}

\usepackage{natbib} 
\usepackage{mathtools} 
\usepackage{stmaryrd}
\usepackage{amsmath,amssymb,amsthm}

\usepackage{graphicx}
\usepackage{tikz}
\usepackage{float}
\usepackage{subcaption}
\usepackage{url}
\usepackage{epstopdf}
\usepackage[normalem]{ulem}

\usepackage{epstopdf}

\usepackage{algorithm}
\usepackage{algorithmic}

\usepackage{url}
\usepackage{graphicx}
\usepackage{subcaption}

\newtheorem{theorem}{Theorem}[section]
\newtheorem{lemma}[theorem]{Lemma}
\newtheorem{proposition}[theorem]{Proposition}
\newtheorem*{remark}{Remark}

\newcommand\smallmath[2]{#1{\raisebox{\dimexpr \fontdimen 22 \textfont 2
      - \fontdimen 22 \scriptscriptfont 2 \relax}{$\scriptscriptstyle #2$}}}
\newcommand\smallplus{\smallmath\mathbin +}
\newcommand\smallminus{\smallmath\mathbin -}

\def\E{{\mathbb{E} }}

\def\bk{{\mathbf{k}}}
\def\bl{{\mathbf{l}}}

\def\pcm{{\Sigma(v)}} % real posterior cov matrix
\def\pcmh{{\hat\Sigma(v)}} % estimated posterior cov matrix
\def\ncmv{{R_\tau(v)}} % observation noise cov matrix given v
\def\ncmvh{{\hat R_\tau(v)}} % estimated observation noise cov matrix given v
\def\aca{{M}} % (A*PCM*A^T)^{-1}
\def\acah{{\hat M}} % (A*PCMH*A^T)^{-1}
\def\pvvh{{\hat \nu(v)}} % posterior variance vector hat
\title{Self-Supervised Uncertainty Estimation For Super-Resolution of Satellite Images}

\author[1]{\href{mailto:<zhe.zheng@ens-paris-saclay.fr>}{Zhe~Zheng}{} }
\author[1]{Valéry~Dewil}
\author[2]{Pablo~Arias}
\affil[1]{
    Centre Borelli \\
    ENS Paris-Saclay \\
    Universite Paris-Saclay \\
    CNRS, Gif-sur-Yvette, France
}
\affil[2]{%
    Dept.\ of Engineering \\
    Universitat Pompeu Fabra \\
    Barcelona, SPAIN
}

\begin{document}

\maketitle

\begin{abstract}
Super-resolution (SR) of satellite imagery is challenging due to the lack of paired low-/high-resolution data. Recent self-supervised SR methods overcome this limitation by exploiting the temporal redundancy in burst observations, but they lack a mechanism to quantify uncertainty in the reconstruction. In this work, we introduce a novel self-supervised loss that allows 
to estimate uncertainty in image super-resolution without ever accessing the ground-truth high-resolution data. 
We adopt a decision-theoretic perspective and show that minimizing the corresponding Bayesian risk yields the posterior mean and variance as optimal estimators. We validate our approach on a synthetic SkySat L1B dataset and demonstrate that it produces calibrated uncertainty estimates comparable to supervised methods. 
Our work bridges self-supervised restoration with uncertainty quantification, making a practical framework for uncertainty-aware image reconstruction.
\end{abstract}

\begin{figure*}[ht!]
    \begin{minipage}{0.09\linewidth}
    \begin{subfigure}[b]{\linewidth}
    \includegraphics[trim={190 46 27 173}, clip,width=\linewidth]{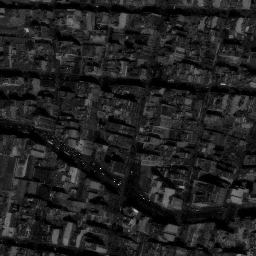}
    \includegraphics[trim={190 46 27 173}, clip,width=\linewidth]{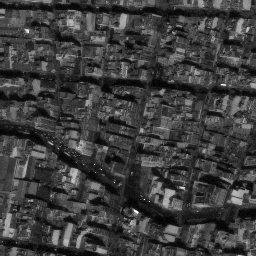}
    \includegraphics[trim={190 46 27 173}, clip,width=\linewidth]{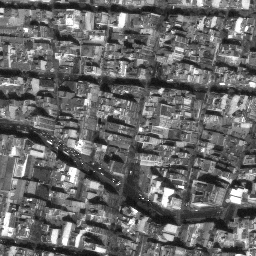}
    \includegraphics[trim={190 46 27 173}, clip,width=\linewidth]{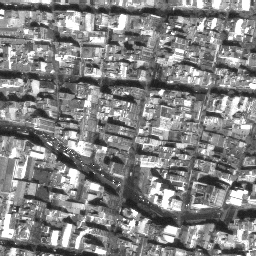}
    \end{subfigure}
    \end{minipage}%
    \begin{minipage}{0.55\linewidth}
    \begin{center}
        \begin{tikzpicture}%
        \node[anchor=south west, inner sep=0] (X) at (0,0){\includegraphics[trim={383 92 54 347}, clip,width=0.323\textwidth]{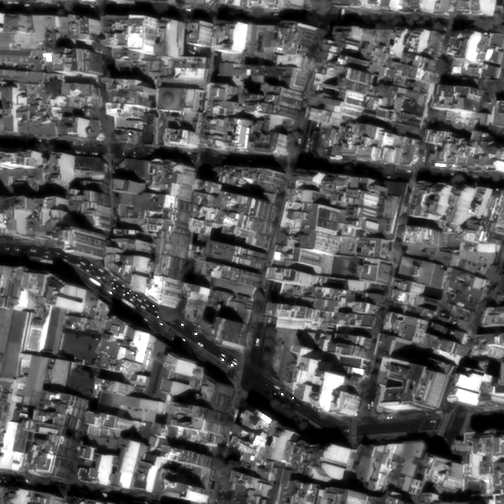}};
        \node[above right=.1em, yshift=0.25\textwidth, fill=white] (Z) {\small Self-sup. 47.56 dB};
        \end{tikzpicture}
        \begin{tikzpicture}%
        \node[anchor=south west, inner sep=0] (X) at (0,0){\includegraphics[trim={383 92 54 347}, clip,width=0.323\textwidth]{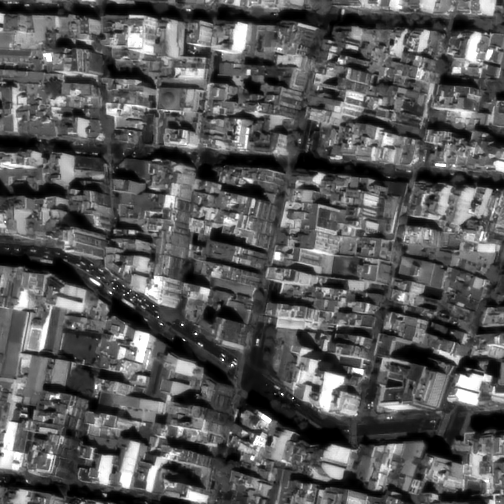}};
        \node[above right=.1em, yshift=0.25\textwidth, fill=white] (Z) {\small Sup. 47.74 dB};
        \end{tikzpicture}
        \begin{tikzpicture}%
        \node[anchor=south west, inner sep=0] (X) at (0,0){\includegraphics[trim={383 92 54 347}, clip,width=0.323\textwidth]{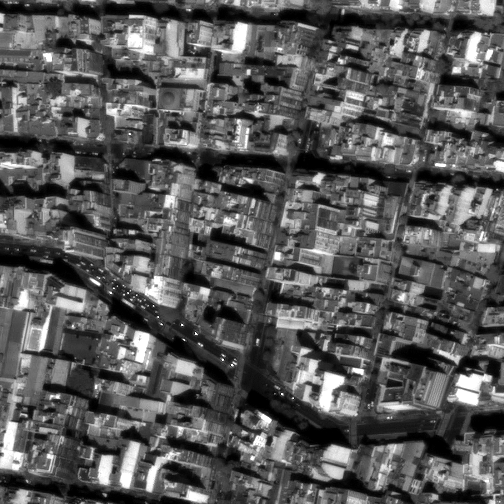}};
        \node[above right=.1em, yshift=0.25\textwidth, fill=white] (Z) {\small GT};
        \end{tikzpicture}

        \begin{tikzpicture}%
        \node[anchor=south west, inner sep=0] (X) at (0,0){
        \includegraphics[trim={383 92 54 347}, clip,width=0.323\textwidth]{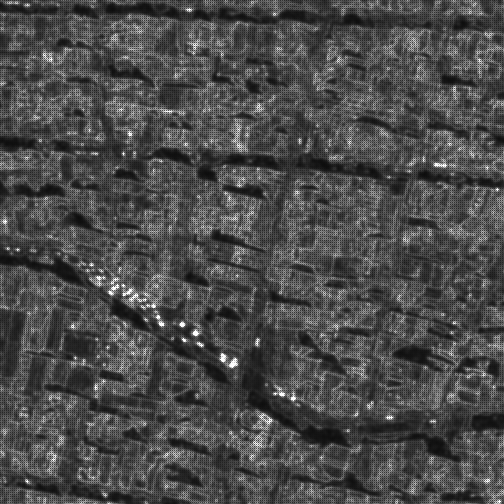}};
        \node[above right=.1em, yshift=0.25\textwidth, fill=white] (Z) {\small Self-sup.};
        \end{tikzpicture}
        \begin{tikzpicture}%
        \node[anchor=south west, inner sep=0] (X) at (0,0){
        \includegraphics[trim={383 92 54 347}, clip,width=0.323\textwidth]{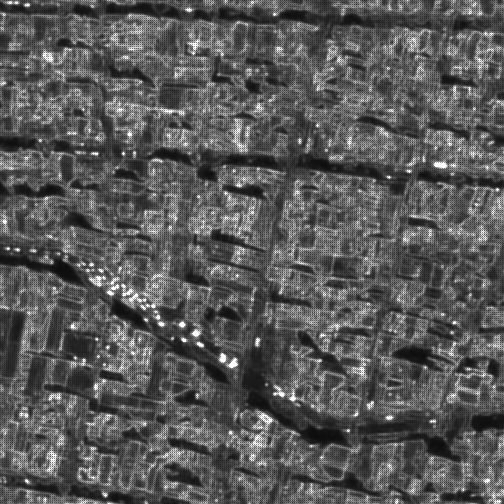}};
        \node[above right=.1em, yshift=0.25\textwidth, fill=white] (Z) {\small Sup.};
        \end{tikzpicture}
        \begin{tikzpicture}%
        \node[anchor=south west, inner sep=0] (X) at (0,0){
        \includegraphics[trim={383 92 54 347}, clip,width=0.323\textwidth]{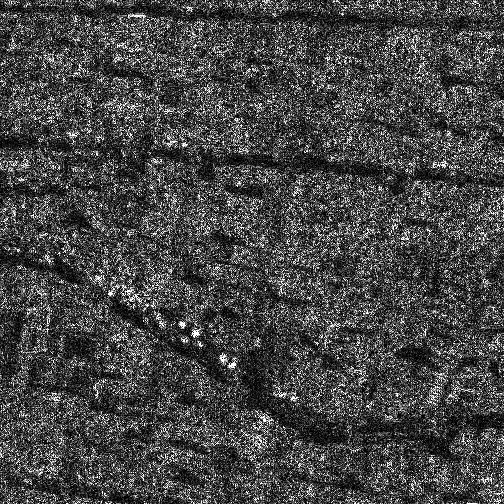}};
        \node[above right=.1em, yshift=0.25\textwidth, fill=white] (Z) {\small Recons. error};
        \end{tikzpicture}

    \end{center}
    \end{minipage}%
    \begin{minipage}{0.36\textwidth}
    \centering
    \includegraphics[width=\linewidth]{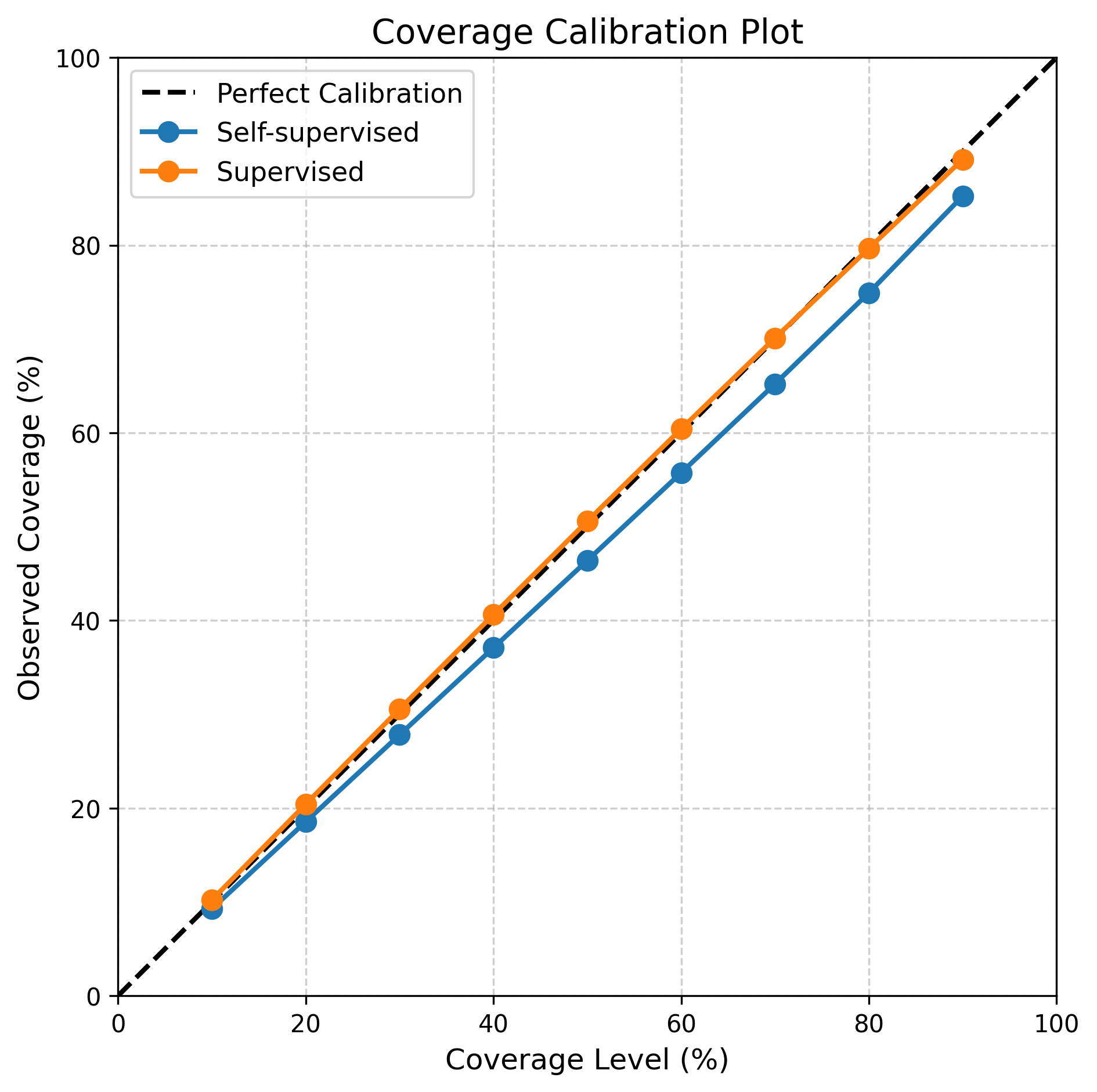}
    \end{minipage}
    
    \caption{
    We propose a self-supervised strategy to train a network that performs super-resolution with uncertainty quantification in satellite imaging. 
    The network is purely trained using noisy and low-resolution data, without requiring high-resolution ground truth. \textbf{Left}: four input images out of a burst of multi-exposure, low-resolution and noisy images; \textbf{Center}: Reconstruction results (top) and estimated uncertainty (bottom) by the self-supervised network and by the supervised one; high resolution ground truth (top) and reconstruction error by the self-supervised method. 
    \textbf{Right}: comparison of the coverage test for the estimated uncertainty (variance of posterior distribution) with the self-supervised and supervised training.}
    \label{fig:comparison_reconstruction}
\end{figure*}

\section{Introduction}\label{seq:intro}
 
\sloppy

Satellite image super-resolution is crucial for applications like human activity monitoring and disaster relief. 
Image super-resolution has achieved remarkable progress with deep neural networks. 
However, most approaches rely on supervised training, requiring paired low and high-resolution data. This is impractical for some real use cases such as satellite image processing where ground-truth high-resolution (HR) images are unavailable. Recent self-supervised methods~\citep{nguyen2021self,nguyen2022self,bhat2023self} have shown that that high-quality reconstructions can be obtained purely from redundant observations, without the need of ground-truth HR images.
These approaches produce deterministic point estimates and lack a way to quantify the uncertainty in the reconstruction. As a result, they can yield visually plausible but unreliable predictions, which could lead to an incorrect interpretation of the images.

Uncertainty estimation has been widely explored in deep learning and applied in vision tasks~\citep{gawlikowski2023survey}.
A variety of approaches have been proposed for estimating the uncertainty~\citep{gawlikowski2023survey,angelopoulos2022image,kendall2017uncertainties,valsesia2021permutation}. Uncertainty can be caused by multiple sources~\citep{hullermeier2021aleatoric} and is usually categorized into two types: aleatoric and epistemic~\citep{hullermeier2021aleatoric,kendall2017uncertainties}. 
Epistemic uncertainty represents our lack of knowledge over the optimal model for a given inference problem, and it could be reduced by better training and/or modeling. Aleatoric uncertainty is associated to the inherent randomness of the inference problem itself and cannot be reduced.

For point estimation problems,
aleatoric uncertainty is typically quantified by measurements of the spread of the (true) posterior distribution, such as variance~\citep{kendall2017uncertainties}, mean absolute deviation~\citep{valsesia2021permutation}, or predictive quantiles~\citep{angelopoulos2022image}. This can be achieved by training networks with suitable loss functions.
A common approach is to estimate the posterior mean and variance
via the minimization of a loss function based on the Gaussian negative log-likelihood \citep{nix1994estimating,bishop1994mixture}.
Such loss functions however, require supervision from the ground truth
which limits their application for image restoration problems in which ground truth is hard or impossible to access.

In this work, we aim to incorporate uncertainty estimation into self-supervised super-resolution. Building upon the recent self-supervised framework of~\citet{nguyen2022self}, we extend it to explicitly model pixel-wise uncertainty.
Our contributions can be summarized as follows:

\emph{(1)}
We introduce a novel self-supervised loss for super-resolution based on the Gaussian negative log-likelihood (NLL), to jointly estimate pixel-wise posterior means and variances, enabling uncertainty-aware reconstruction (Section \ref{sec:method}).
Rather than assuming that the posterior distribution 
follows a Gaussian distribution, we take a decision-theoretic perspective 
and show that minimizing the corresponding risk yields both the mean and variance of the posterior distribution as optimal estimators. 

\emph{(2)}
We derive the optimal estimators under the resulting Bayesian risk, providing a theoretical justification for the approach (Section \ref{sec:optimal_estimator}).
In particular, we show that for a subsampling degradation operator and under Gaussian degradation noise (possibly signal dependent), the proposed loss is minimized by the posterior mean and pixel-wise variances, thus being equivalent to a supervised training with the Gaussian NLL loss. 
    
\emph{(3)} We validate our method on a synthetic SkySat's L1B dataset generated following~\citet{nguyen2022self}, showing calibrated uncertainty estimates comparable to those obtained with supervised Gaussian NLL (Section \ref{sec:results}).
    
By bridging the self-supervised restoration with uncertainty quantification, our method makes a practical framework for uncertainty-aware satellite image reconstruction.
While we focus on the super-resolution problem, our analysis assumes a more general linear degradation model which could be useful for uncertainty quantification in other 
inverse problems in imaging where ground truth data is not available.

\section{Related Work}
\label{sec:related_work}

\textbf{Uncertainty estimation.}
Uncertainty is commonly categorized into aleatoric and epistemic uncertainty~\citep{hullermeier2021aleatoric}, although there is no widespread agreement on a precise definition of these categories \citep{valdenegro2022deeper,gruber2025sources,kirchhof2025reexamining,bickford2025rethinking}.
Bayesian neural networks~\citep{blundell2015weight,gal2016dropout,gal2015bayesian} formalize epistemic uncertainty as a probability over the model parameters conditioned on the training dataset while the aleatoric uncertainty is formalized as the true posterior probability of the label conditioned on an input data during inference. Deep ensembles \citep{lakshminarayanan2017simple} and Monte Carlo Dropout~\citep{gal2016dropout} have also been 
proposed to approximate the distribution on the weights. 
Aleatoric uncertainty is typically captured by maximizing the likelihood of the ground truth data in a supervised setting~\citep{nix1994estimating,bishop1994mixture,kendall2017uncertainties}. This is often motivated with a probabilistic regression perspective in which the outputs are assumed to have Gaussian noise.

In this work, we extend the Gaussian NLL loss to a self-supervised setting, where instead of a HR clean ground truth label we use a LR noisy one. 
We motivate our loss via 
a decision-theoretic viewpoint: we show that the minimization of the risk induced by the loss yields the posterior mean and variance of the high resolution target. This perspective emphasizes expected prediction error (as in \citep{bickford2025rethinking}) rather than strict probabilistic calibration (as in \citep{lahlou2023deup}), and highlights the fact that the trained network will approximate the posterior variance \emph{even if the true posterior distribution is not Gaussian}.

\textbf{Uncertainty in imaging problems.}
A line of uncertainty estimation in imaging problems is leveraging the priors encoded by networks and sampling from the approximate posterior distribution, such as \citep{chung2023diffusion,altekruger2023wppnets,kawar2022denoising}.
Conformal prediction is applied to imaging restoration by~\citet{angelopoulos2022image}. This method constructs valid confidence intervals by calibrating a heuristic estimate on a calibration dataset. It is a post-processing step without modifying the underlying model; thus, conformal correction is directly applicable to the method proposed in this paper. 
\citet{tachella2024equivariant} recently proposed Equivariant Bootstrapping. This approach constructs i.i.d. samples that approximate the estimator's distribution, allowing for quantification of risk between the reconstruction and ground truth.

Uncertainty estimation has received significant attention in satellite imaging. 
\citet{ebel2023uncrtaints} predicts a pixel-wise aleatoric uncertainty map to quantify the reliability of the reconstruction  for multi-temporal cloud removal. 
\citet{valsesia2021permutation}  estimates per-pixel aleatoric uncertainty for multi-temporal super-resolution, by using a Laplacian NLL loss in a supervised manner.
Uncertainty quantification is also well-established in other downstream remote sensing applications beyond image reconstruction, including road extraction in SAR imagery~\citep{haas2021uncertainty}, 
atmospheric states estimation~\citep{braverman2021post}, crop yields, PM2.5 estimation~\citep{miranda2025analysis}.

These approaches require ground-truth images to be trained.

\textbf{Self-supervised learning for image reconstruction.} Self-supervised learning has become an effective strategy for image restoration when clean labels are unavailable~\citep{tachella2026self}. 
The pioneering Noise2Noise framework~\citep{lehtinen2018noise2noise} showed that a denoising network can be trained with the MSE loss using pairs of independently corrupted images, leading to the same optimal estimator of the clean image.
Subsequent methods removed the need for paired datasets: blind-spot methods~\citep{krull2019noise2void,batson2019noise2self},
SURE-based approaches
\citep{metzler2018unsupervised,soltanayev2018training,tachella2024unsure} and methods based on adding more noise to the already noisy images 
\citep{kim2021noise2score,pang2021recorrupted,moran2020noisier2noise}.

These approaches has been generalized to video~\citep{ehret2019model,dewil2021self}, and to inverse problems with a linear degradation operators, such as a demosaicking~\citep{ehret2019joint}, burst super-resolution~\citep{nguyen2021self,nguyen2022self,bhat2023self},
MRI reconstruction 
\cite{aggarwal2022ensure,millard2023theoretical}, CT reconstruction \cite{hendriksen2020noise2inverse,chen2021equivariant}, video microscopy denoising \cite{aiyetigbo2024unsupervised}; and for training diffusion models from incomplete observations \cite{daras2023ambient}.

\citet{altekruger2023wppnets}, \citet{laine2019high}, and \citet{krull2020probabilistic} proposed self-supervised image restoration methods with uncertainty quantification. 
The first paper proposed training a conditional normalizing flow in an unsupervised way for super-resolution, allowing sampling from the posterior distribution.
In the latter papers, the denoising network outputs a parameterization of the posterior probability, and is trained by maximizing the likelihood of the noisy image. \citet{krull2020probabilistic} considers discrete output images with integer values in $[\![0,255]\!].$ The network outputs the probability mass function for each pixel. In \citet{laine2019high} the posterior probability is assumed Gaussian, which motivates the use of a Gaussian NLL loss between the network output and the noisy target. Our loss can be seen as a generalization of this approach to the case in which the observation is not only noisy, but also subsampled.

\section{Method}
\label{sec:method}

\subsection{Self-Supervised MISR}\label{sec:misr}

The proposed method (shown in Figure \ref{fig:data_flow}) builds upon the recent self-supervised MISR method~\citep{nguyen2022self}, which trains multi-image super-resolution (MISR) networks without requiring corresponding high-resolution (HR) ground-truth images. 
The self-supervised loss for MISR was first introduced in \citet{nguyen2021self}, and then was extended \citep{nguyen2022self} to handle low-resolution bursts $\{v_t\}_{t=1}^N$, where $v_i \in \mathbb{R}^q$,  captured with varying exposure times.
We summarize it below.

\paragraph{Degradation model for satellite MISR.} 
Each low-resolution (LR) satellite frame $v_t$ is of size $H\times W$ and is modeled as a blurred, warped, sampled, and noisy observation of an underlying infinite-resolution scene $\mathcal{I}$. Specifically, for frame $t$,
$v_{t} \;=\; \Pi \, F_t \bigl( \mathcal{I} * k \bigr) \;+\; n_t,$
where $k$ is the point spread function that jointly accounts for optical blur and pixel integration, $F_t$ is the frame-dependent geometric transform (such as a homography, or an affine transform),
$\Pi$ is the 2D sampling operator of the sensor array,
and $n_t$ represents additive noise. 
In \emph{push-frame} constellations such as SkySat, the optical cutoff of $\mathcal I \ast k$ is about twice the LR sampling rate, implying that practically achievable super-resolution is limited to a $2\times$. 
By denoting by $u$ a HR sampling of size $2H\times 2W$ of the ideal continuous image $\mathcal I\ast k$, we have the following degradation model: 
\begin{equation}
  v_{t} \;=\; D \, F_t u \;+\; n_t,
  \label{eq:degradation}
\end{equation}
where $D \in \mathbb{R}^{HW \times 4HW}$ is a subsampling operator by a factor of two, which simply samples the pixels with even coordinates. The goal is to 
estimate $u$.
The first LR image \(v_1\) is considered as the \emph{reference}, and it is assumed that it is aligned with the high resolution image (i.e. $F_1 = I$).

\paragraph{Network architecture.} 

\begin{figure}
    \centering
    \includegraphics[width=1.0\columnwidth]{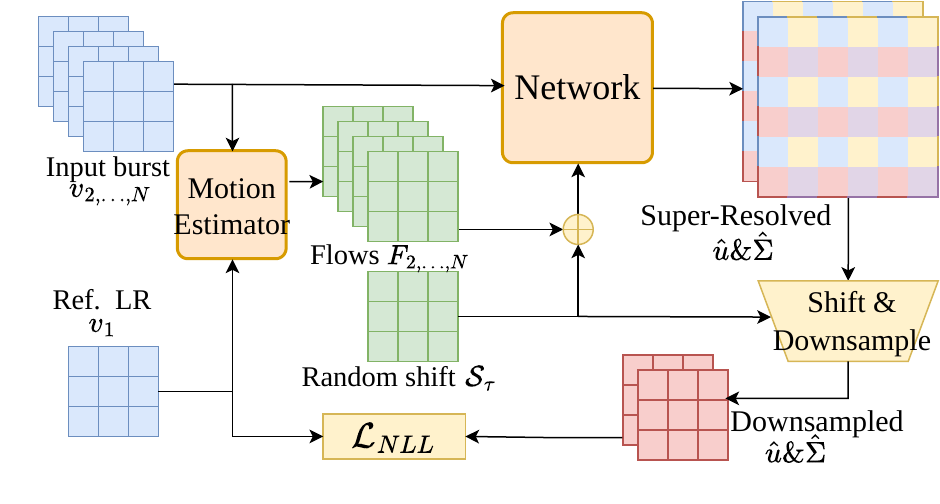}
    \caption{Data flow during training. The optical flows $\{F_t\}_{i=2}^N$ are substracted first by the shift $\mathcal{S}_{\tau}$ and then sent to the network together with the burst LR observations $\{v_t\}_{i=2}^N$. The super-resolved image $\hat u(v)$ and $\hat\Sigma$ are shifted by $\mathcal{S}_{\tau}$ and then downsampled to compute loss with the reference $v_1$.}
    \label{fig:data_flow}
\end{figure}

The HDR-DSP (for \emph{High Dynamic Range Deep Shift-and-Pool})  network was designed by~\citet{nguyen2022self} for handling multi-exposure LR inputs, with inaccurate exposure times. It takes as input the burst of LR images $\{v_t\}_{t=1}^N$ (between 5 and 20 images), together with their corresponding exposure times. The LR images are first normalized by dividing them by the exposure time, and then decomposed into base and detail components with Gaussian low-pass and high-pass filters. The network processes the detail components, while the base components
which are easy to super-resolve, are averaged to reduce noise and upscaled using bilinear interpolation.

The network consists of three trainable modules: encoder, fusion and decoder.
The encoder extract features for each LR input frame. These features 
processed by the fusion network which fuses them into a single HR feature map,
which is finally processed by the decoder to compute the HR output image.
The fusion network takes as input an estimate of the motion $F_t$, which is required for a precise placement of the encoded LR features in the high resolution grid. The motions $F_t$ are estimated by a motion estimation network which computes the optical flow between each input LR frame $v_t$ and the reference $v_1$. 

While the specific details of the architecture are not relevant, the following properties are important for the self-supervised training. \textit{(i)} the network produces an HR image aligned to the reference frame; \textit{(ii)} the network can process any number of input images (and it is invariant to their order). The last property is inherited from the pooling operators used for feature fusion.

\paragraph{Self-supervised loss.} 
The self-supervised MISR method utilizes a burst of  degraded LR observations, $\{v_t\}_{t=1}^N$ 
to produce a HR reconstruction without requiring access to the corresponding ground-truth HR image, $u$. 
Instead, it leverages the temporal redundancy within the burst, following a strategy inspired by N2N~\citep{lehtinen2018noise2noise} and F2F~\citep{ehret2019model,dewil2021self}.
During the training phase, 
the reference frame $v_1$ is excluded from the network's input to serve as the target in the loss.
For the self-supervised training, \citet{nguyen2022self} artificially modified 
the degradation model by
introducing a random shift operator $S_\tau$
\begin{equation}
v_t = DF_tS_\tau u + n_t.
\label{eq:degradation-model-grid-shifting-z}
\end{equation}
$S_\tau$ shifts the image by $\tau$ pixels,
with $\tau \sim \mathcal U\{0,1\}^2$ (its justification will be given shortly).
This amounts to changing the assumed position of the high resolution image.
For the reference frame, $F_1 = I$ and we have that $
v_1 = DS_\tau u + n_1.$
The self-supervised loss applies this degradation operator to $\hat u = \hat u(\{v_t\}_{t=2}^N)$
\[\ell_{ss}(\hat u, v_1) = \|DS_\tau \hat{u} - v_1\|^2.\]
Note that without $S_\tau$, only the output pixels sampled by $D$ (those with even coordinates) would be ``seen'' by the loss. 
Instead $S_\tau$ shifts the super-resolved image before down-sampling, and thus each of the four possible subsamplings of $\hat u(v)$ has a probability 1/4 of appearing in the loss.
Note that the shift $\tau$ has to be added to the estimates of the motion $F_t$ computed by the motion estimation network. This keeps the alignment between the between $DS_\tau \hat u$ and $v_1$. 

In what follows, to simplify notation, we denote the reference as $z = v_1$ and the network's input $v = \{v_t\}_{t=2}^N$.

\subsection{Self-Supervised Gaussian NLL Loss}

As discussed above, the Gaussian negative log-likelihood (NLL) is commonly used as a loss function for uncertainty estimation. 
Most frequently, it is used to compute the pre-pixel posterior variances: 
\begin{equation}
\mathcal{L}^{\text{sup}}_{\text{NLL}}(\hat{u}(v), \hat \sigma(v), u) = \sum_i\frac{(u_i - \hat{u}(v)_i)^2}{2\hat \sigma^2(v)_i}  + \frac{1}{2}\log \hat \sigma^2(v)_i.
\label{eq:sup-gaussian-nll}
\end{equation}
From the perspective of minimizing the empirical risk function, this formulation allows the model to predict both accurate reconstructions, the posterior mean $\hat{u}(v) = \mathbb{E}_{u\mid v}[u]$, and uncertainty estimates $\hat \sigma^2(v) =\mathbb{E}_{u\mid v}[(u - \hat{u}(v))^2] $. 
We emphasize here that this remains true no matter the posterior distribution - i.e. the fact that the loss corresponds to a NLL of a Gaussian distribution does not limit its validity to Gaussian posteriors. 
This fact seems to be often overlooked in the literature about uncertainty quantification, where the Gaussian NLL is usually motivated by assuming a Gaussian probabilistic model for the posterior \citep{nix1994estimating,bishop1994mixture,kendall2017uncertainties,lakshminarayanan2017simple}. 

Inspired by this, to incorporate the uncertainty estimation into the self-supervised framework, we penalize the Gaussian NLL of the degraded target $z$.

We consider a general linear degradation operator \(A_\tau\) which depends on a random variable $\tau$, i.e. 
$z = A_\tau u + n,$
where $n$ is signal-dependent additive Gaussian white noise.
While in our case, $A_\tau$ corresponds to the random shifts $S_\tau$, other self-supervised methods consider different random linear degradations \citep{ehret2019model,dewil2021self,aggarwal2022ensure,millard2023theoretical}. The proposed loss and Lemmas \ref{lemma:optimal_condition_full} and \ref{lemma:optimal_condition_diagonal} can be applied in those cases.

We will consider that the distribution of $\tau$ might depend on $v$.
If $u|v$ has mean $\hat u(v)$ and covariance matrix $\pcmh$, it follows from the degradation model that $z|v,\tau$ has mean $A_\tau \hat u(v)$ and covariance matrix $A_\tau \pcmh A_\tau^T + \ncmv$, where $\ncmv = \mathbb E_{n,u|\tau,v}\{nn^T\}$. 
This motivates the following self-supervised NLL loss:
\begin{align}
\mathcal{L}&_{\text{NLL}}(\hat{u}(v),  \pcmh, z) \notag \\
=& \frac{1}{2} (z \smallminus A_\tau \hat{u}(v))^T (A_\tau \pcmh A_\tau^T \smallplus \ncmvh)^{-1} (z \smallminus A_\tau \hat{u}(v))  \notag \\
+& \frac{1}{2} \log \det (A_\tau \pcmh A_\tau^T \smallplus \ncmvh).\label{eq:nll-self}
\end{align}
Here, $\hat u(v)$ and $\pcmh$ are the estimators outputted by the neural network receiving as input the LR sequence $v = \{v_i\}_{i=2}^N$. 
During training, 
$A_\tau$ is assumed to be known, but $\ncmv$ is not and needs to be estimated from the available data (e.g. from the degraded target $z$, as will be explained shortly).
We denote by $\ncmvh$  the estimated observation noise covariance. 
In what follows we will study the optimal estimators of the corresponding Bayes risk:
\begin{equation}
\mathcal{R}\bigl[\hat u, \hat \Sigma \bigr]
=
\mathbb{E}_{v}\mathbb{E}_{\tau, z \mid v}\Bigl[\mathcal{L}_{\mathrm{NLL}}(\hat{u}(v), \hat \Sigma(v), z)
\Bigr].\label{eq:self-sup-risk-nll}
\end{equation}
During training we minimize the empirical risk over a finite dataset, which is an approximation of $\mathcal R$.

While we express the loss using a full covariance matrix $\pcmh$ for the sake of generality, in this work we estimate only the main diagonal, which corresponds to the the per-pixel posterior variances. 

In the next section we derive general stationarity conditions for $\hat u(v), \pcmh$ to be minimizers of the proposed self-supervised risk \eqref{eq:self-sup-risk-nll}, and show that $\mathbb E\{u|v\}$, and $\pcm$ are among the minimizers. Furthermore, we show that in the specific case of the subsampling degradation model \eqref{eq:degradation-model-grid-shifting-z} and for a network that estimates the per-pixel variances (as opposed to the full covariance matrix), the \emph{only} minimizers are $\mathbb E\{u|v\}$ and $\text{diag}(\pcm)$\footnote{
For a square $n\times n$ matrix $B$ and a vector $b\in\mathbb R^n$, $\text{diag}(B)\in\mathbb R^n$ denotes the diagonal of matrix $B$, and $\text{diag}(b)$ denotes a $n\times n$ diagonal matrix with $b$ on its diagonal.} implying that a network trained with this loss will correctly approximate the posterior mean and per-pixel variances.

These results require a precise estimation of $\ncmv$.

\paragraph{Estimation of $\ncmvh$.} 
We model the noise $n$ as signal dependent Gaussian noise with zero mean and variance dependent on the clean value of the pixel, i.e. $z_i = (A_\tau u)_i + n_i$ with $n_i = \sqrt{g((A_\tau u)_i)} r_i$, for $r_i \sim \mathcal N(0,1)$. The function $g:\mathbb R\to\mathbb R$ determines the variance of the noise from the clean signal. In this setting $\ncmv = \text{diag}(\mathbb E\{g(A_\tau u)|v\})$. 

In practice an affine variance function $g(s) = as + b$ is commonly used to model the shot and readout noise, which results in $\ncmv = \text{diag}(aA_\tau\mathbb E\{ u|v\} + b)$. Thus the network output $\hat u(v)$ can be used to estimate $\ncmvh$ as $\ncmvh = \text{diag}(aA_\tau\hat u(v) + b)$.

\section{Optimal estimators}
\label{sec:optimal_estimator}

To minimize the Bayesian risk $\mathcal R$ in \eqref{eq:self-sup-risk-nll} we minimize the inner expectation as a function of $v$
\begin{equation}\label{eq:risk_full_conditional}
\mathcal{R}_v\bigl[\hat u, \hat \Sigma \bigr]
\;=\,
\mathbb{E}_{\tau,z\mid v}\Bigl[
\mathcal{L}_{\mathrm{NLL}}(\hat{u}(v), \hat \Sigma(v), z)
\Bigr].
\end{equation}
We denote this ``per-input risk'' as $\mathcal R_v$.

In this section we establish the relation between the optimal estimators that minimize the self-supervised risk, $\hat u(v), \pcmh$, and the mean and covariance matrix of the posterior distribution $\mathbb E[u|v]$ and $\pcm = \mathbb E[(u - \mathbb E[u|v])(u - \mathbb E[u|v])^T|v]$. 

\begin{lemma}[Full covariance and general linear degradation.]\label{lemma:optimal_condition_full}
    Assume that $\tau$ and $u$ are conditionally independent given v and $z = A_\tau u + n$, where $n$ is zero mean additive noise with signal dependent variance.
    Denote 
    \begin{align}
    \acah &= (A_\tau\pcmh A_\tau^T + \ncmvh)^{-1}, \text{ and }\\
    \aca &= (A_\tau\mathbb E_{u|v}[(u - \hat u(v))(u - \hat u(v))^T] A_\tau^T + \ncmv)^{-1}.\nonumber
    \end{align}
    Then
    estimators $\hat u(v)$ and $\pcmh$ that minimize the self-supervised risk~\eqref{eq:risk_full_conditional} verify the following conditions:
    \begin{align}
        &\E_{\tau\mid v}\Bigl[A_\tau^T \acah A_\tau \Bigr] \hat u (v) = \E_{\tau\mid v} \Bigl[A_\tau^T  \acah A_\tau \Bigr] \E_{u\mid v}[u] \label{eq:stationarity-u}\\
        & \E_{\tau \mid v}\Big[A_\tau^T (\acah - \acah \aca^{-1} \acah) A_\tau\Big] = 0. \label{eq:stationarity-cov}
    \end{align}
\end{lemma}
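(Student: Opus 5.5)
Since $\mathcal R$ is an expectation over $v$ of the per-input risk, it suffices to minimize $\mathcal R_v$ in \eqref{eq:risk_full_conditional} pointwise in $v$. For fixed $v$ the unknowns are the vector $\hat u=\hat u(v)$ and the symmetric positive semidefinite matrix $\hat\Sigma=\pcmh$. We treat $\ncmvh$ as a fixed quantity and do not differentiate through it. At an interior minimizer the gradients of $\mathcal R_v$ with respect to $\hat u$ and $\hat\Sigma$ must vanish, and these two first-order conditions should give \eqref{eq:stationarity-u} and \eqref{eq:stationarity-cov}.

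\textbf{Step 1: condition out $z$ given $\tau$.} Write $\acah=\acah_\tau$, which depends on $\tau$ but not on $z$. Then
\[\mathcal R_v=\E_{\tau|v}\Big[\tfrac12\E_{z|\tau,v}\big[(z-A_\tau\hat u)^T\acah(z-A_\tau\hat u)\big]-\tfrac12\log\det\acah\Big].\]
Because $\tau$ and $u$ are conditionally independent given $v$, we have $u\mid v,\tau\sim u\mid v$. Decompose $z-A_\tau\hat u=A_\tau(u-\hat u)+n$. The noise $n$ has zero mean conditionally on $u$, so the cross term vanishes. The remaining second moment is
\[\E\big[(z-A_\tau\hat u)(z-A_\tau\hat u)^T\mid\tau,v\big]=A_\tau\E_{u|v}\big[(u-\hat u)(u-\hat u)^T\big]A_\tau^T+\ncmv=\aca^{-1}.\]
Taking the trace against $\acah$ gives
\[\mathcal R_v=\E_{\tau|v}\Big[\tfrac12\operatorname{tr}(\acah\aca^{-1})-\tfrac12\log\det\acah\Big].\]
Here $\aca^{-1}$ is a quadratic function of $\hat u$, and $\acah$ is a function of $\hat\Sigma$ only.

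\textbf{Step 2: stationarity in $\hat u$.} Differentiate the original form of the quadratic term under the expectation:
\[\nabla_{\hat u}\mathcal R_v=-\E_{\tau|v}\big[A_\tau^T\acah\,\E_{z|\tau,v}[z-A_\tau\hat u]\big].\]
Since $\E[z\mid\tau,v]=A_\tau\E[u\mid v]$, the inner expectation equals $A_\tau(\E[u\mid v]-\hat u)$. Setting the gradient to zero gives \eqref{eq:stationarity-u} directly.

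\textbf{Step 3: stationarity in $\hat\Sigma$.} Use $d(\acah)=-\acah\,A_\tau\,d\hat\Sigma\,A_\tau^T\acah$ together with $d\log\det\acah=\operatorname{tr}(\acah^{-1}d\acah)$. This gives
\[d\mathcal R_v=\tfrac12\E_{\tau|v}\operatorname{tr}\Big(A_\tau^T\big(\acah-\acah\aca^{-1}\acah\big)A_\tau\,d\hat\Sigma\Big).\]
The bracketed matrix is symmetric. Requiring $d\mathcal R_v=0$ for every symmetric perturbation $d\hat\Sigma$ therefore forces \eqref{eq:stationarity-cov}.

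\textbf{Main obstacle.} The delicate point is justifying the first-order conditions. A minimizer $\hat\Sigma$ could lie on the boundary of the positive semidefinite cone, in which case only a one-sided condition holds. I would assume the minimizer is interior, or note that $\acah$ stays well defined on a neighborhood because $\ncmvh\succ0$. It also has to be made explicit that $\ncmvh$ is held fixed and not differentiated, even though in practice it is computed from $\hat u$. One must also check that differentiation commutes with the expectations; this holds under finite second moments of $u\mid v$ and of $n$.
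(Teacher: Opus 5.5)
Your proposal is correct and follows essentially the same route as the paper: you zero the gradients in $\hat u(v)$ and $\pcmh$ using $d\acah = -\acah A_\tau\, d\hat\Sigma\, A_\tau^T \acah$, and you use the moment identity $\E[(z-A_\tau\hat u)(z-A_\tau\hat u)^T\mid\tau,v]=\aca^{-1}$, which rests on conditional independence and zero-mean noise. The only differences are that you average over $z$ first, obtaining $\tfrac12\operatorname{tr}(\acah\aca^{-1})-\tfrac12\log\det\acah$, and then differentiate (the paper differentiates the per-sample loss in the Appendix and averages afterwards), and that you make explicit the assumptions the paper leaves implicit: $\ncmvh$ held fixed, symmetric perturbations, and an interior minimizer.
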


Note that $\hat u(v) = \mathbb E\{u|v\}$, $\pcmh = \pcm$ are a solution for the above stationarity condition. If the matrix \(\mathbb{E}_{\tau\mid v}[A_\tau^T(A_\tau\pcmh A_\tau^T + \ncmvh)^{-1}A_\tau]\) is invertible, then necessarily $\hat u (v) = \E_{u\mid v}[u]$. This is the case for  $A_\tau = S_\tau$, the grid shifting operator in \eqref{eq:degradation-model-grid-shifting-z}.

\begin{proof}

Both equations follow from zeroing the derivatives of the self-supervised risk with respect to $\hat u(v)$ and $\pcmh$.

\noindent \textit{Derivation of \eqref{eq:stationarity-u}.}
The derivative of \eqref{eq:risk_full_conditional} w.r.t. $\hat u (v)$ reads:
\[
\frac{\partial \mathcal{R}}{\partial \hat u (v)}
=\mathbb{E}_{\tau,z\mid v}\Bigl[
A_\tau^T (A_\tau \pcmh A_\tau^T \smallplus \ncmvh)^{-1}
\bigl(A_\tau\hat u (v) \smallminus z\bigr)
\Bigr],
\]
where we can identify $\acah$ as the inverted matrix in the RHS.
By setting the derivative to $0$, we have the stationarity condition for solving $\hat{u} (v)$:
\begin{align*}
 \E_{\tau,z\mid v}&\Bigl[ A_\tau^T \acah A_\tau\hat u (v) \Bigr] = \E_{\tau,z\mid v} \Bigl[
A_\tau^T \acah
z \Bigr] \\
 & = \E_{\tau,u,n\mid v} \Bigl[
A_\tau^T \acah
(A_\tau u + n) \Bigr] \\
&= \E_{\tau,u\mid v} \Bigl[
A_\tau^T \acah
A_\tau u \Bigr] + \E_{\tau,u\mid v} \E_{n|\tau,u,v}\Bigl[
A_\tau^T \acah n \Bigr] \\
&= \E_{\tau\mid v} \Bigl[
A_\tau^T \acah
A_\tau \Bigr] \E_{u\mid v}[u].
\end{align*}
In the last step, we used the assumption that $\tau$ and $u$ are conditionally independent given $v$ 
and that $n$ has zero mean. 

\noindent \textit{Derivation of \eqref{eq:stationarity-cov}.}
The derivative of~\eqref{eq:risk_full_conditional} w.r.t. $\pcmh$ reads as
\begin{multline*}    
    \frac{\partial \mathcal{R}}{\partial \pcmh} = \E_{\tau,z \mid v}\left[ \frac{\partial \mathcal{L}}{\partial \pcmh}\right] 
= \\ \E_{\tau,z \mid v}\Bigl[ A_\tau^T \acah A_\tau - A_\tau^T \acah (z - A_\tau \hat u(v))(z - A_\tau \hat u(v))^T \acah A_\tau \Bigr]
\end{multline*}
(see the Appendix \ref{app:derivative-wrt-pcmv}). The condition \eqref{eq:stationarity-cov} follows from setting the derivative to zero, 
and rewriting the 2nd term as 
\begin{align*}
&\E_{\tau,z \mid v}[A_\tau^T \acah  (z - A_\tau \hat u(v))(z - A_\tau \hat u(v))^T \acah A_\tau] = \\
&\E_{\tau \mid v}\Big[A_\tau^T \acah \big( A_\tau\E_{u\mid v}[ (u - \hat u(v)) (u -\hat u(v))^T]A_\tau^T \\
&\quad\quad\quad\quad\quad + \E_{n,u\mid \tau,v}[ n n^T] \big)\acah A_\tau \Big] 
= \\
 &\E_{\tau \mid v}\Big[A_\tau^T \acah   (A_\tau\pcm A_\tau^T + \ncmv) \acah A_\tau   \Big],
\end{align*}
where we have used the previous definition of \(\ncmv\).
Recognizing $M$ in the last step yields the RHS of \eqref{eq:stationarity-cov}.
\end{proof}

In the above formulation, the loss was defined using a full covariance matrix $\pcmh$, which allows modeling correlations between different components of $u$. While this provides a more flexible uncertainty representation \citep{dorta2018structured}, it is often computationally expensive, especially for high-dimensional outputs such as pixel-level predictions.

In practice, a common simplification is to ignore the correlation between output pixels and estimate only the per pixel variances, reducing the covariance matrix to a diagonal form. In this setting, we have $\pcmh =\text{diag}(\pvvh)$ where $\pvvh\in\mathbb R^d$ contains the estimated posterior variance for each pixel.
The Gaussian NLL loss is now considered as a function of $\hat u(v)$ and $\pvvh$. We next derive the optimal estimators for this diagonal variant.
Note that \emph{we do not need to assume} that the true posterior has a diagonal covariance matrix $\Sigma(v)$. Our goal is to estimate the diagonal of the true posterior covariance. 

\begin{lemma}[Diagonal covariance a nd general linear degradation]\label{lemma:optimal_condition_diagonal}
    We make the same assumptions as in Lemma \ref{lemma:optimal_condition_full}. In addition we assume that $\pcmh = \text{diag}(\pvvh)$, 
    and denote $\acah = (A_\tau \text{diag}(\pvvh) A_\tau^T + \ncmvh)^{-1}$. The system of stationarity conditions for the optimal $\hat u(v), \pvvh$
    is given by \eqref{eq:stationarity-u} 
    and by the following equation:
    \begin{align}
        & \E_{\tau \mid v}\Big[\text{diag}(A_\tau^T (\acah  -  \acah \aca^{-1} \acah)A_\tau)\Big]  = 0.\label{eq:stationarity-cov-diagonal}
    \end{align}
\end{lemma}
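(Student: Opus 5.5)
The plan is to reuse the computation from the proof of Lemma \ref{lemma:optimal_condition_full} and restrict it, through the chain rule, to the diagonal parameterization $\pcmh = \text{diag}(\pvvh)$.

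\emph{Mean condition.} The derivative of the per-input risk \eqref{eq:risk_full_conditional} with respect to $\hat u(v)$ does not involve how $\pcmh$ is parameterized. It only requires that $\acah$ be a fixed symmetric positive definite matrix, for each $\tau$, when we differentiate in $\hat u$.

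There is one subtlety. If $\ncmvh$ is computed from $\hat u(v)$, as in the affine noise model, then $\acah$ depends on $\hat u$. Following Lemma \ref{lemma:optimal_condition_full}, we treat $\ncmvh$ as given (not differentiated through). With that convention, the argument of Lemma \ref{lemma:optimal_condition_full} applies verbatim:
\begin{itemize}
\item set $\E_{\tau,z\mid v}[A_\tau^T\acah(A_\tau\hat u(v)-z)]=0$;
\item substitute $z=A_\tau u+n$;
\item use the conditional independence of $\tau$ and $u$ given $v$, and the fact that $n$ has zero mean.
\end{itemize}
This yields \eqref{eq:stationarity-u}.

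\emph{Variance condition.} Write $\mathcal R_v$ as a function $G(\pcmh)$ of a general symmetric matrix argument, evaluated at $\pcmh=\text{diag}(\pvvh)$. Since $\partial\,\text{diag}(\pvvh)/\partial \hat\nu_i = e_ie_i^T$, the chain rule gives
\[
\frac{\partial \mathcal R_v}{\partial \hat\nu_i}=\Big\langle \frac{\partial G}{\partial \pcmh}, e_ie_i^T\Big\rangle=\Big(\frac{\partial G}{\partial \pcmh}\Big)_{ii}.
\]
So the gradient with respect to $\pvvh$ is the diagonal of the full matrix gradient. Lemma \ref{lemma:optimal_condition_full} and Appendix \ref{app:derivative-wrt-pcmv} already give this matrix gradient, namely $\E_{\tau,z\mid v}[A_\tau^T\acah A_\tau - A_\tau^T\acah(z-A_\tau\hat u)(z-A_\tau\hat u)^T\acah A_\tau]$. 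We use the standard formulas $\partial_X \log\det(AXA^T+R)=A^T(AXA^T+R)^{-1}A$ and $\partial_X\, r^T(AXA^T+R)^{-1}r=-A^T K r r^T K A$, where $K=(AXA^T+R)^{-1}$.

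Because $\text{diag}(\cdot)$ is linear, it commutes with $\E_{\tau,z\mid v}$. We can therefore rewrite the second-moment term exactly as in the proof of Lemma \ref{lemma:optimal_condition_full}. Expanding $z-A_\tau\hat u = A_\tau(u-\hat u)+n$, the cross terms vanish: $n$ is zero mean given $(u,\tau,v)$, and $u$ is independent of $\tau$ given $v$. The term becomes $\acah\aca^{-1}\acah$, with $\aca^{-1}=A_\tau\E_{u\mid v}[(u-\hat u)(u-\hat u)^T]A_\tau^T+\ncmv$. Setting the diagonal gradient to zero gives \eqref{eq:stationarity-cov-diagonal}.

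\emph{Main obstacle.} The only real care needed is in the parameterization. A diagonal $\pcmh$ restricts the feasible set, so stationarity holds only along the diagonal directions; off-diagonal entries of the gradient need not vanish. This is why we use the chain-rule restriction rather than a symmetric-matrix derivative. We must also check that the gradient formula in Appendix \ref{app:derivative-wrt-pcmv} is the unconstrained (entrywise) one, so that its diagonal is exactly $\partial/\partial\hat\nu_i$ with no factor-of-two issues. Since the perturbation $e_ie_i^T$ is itself symmetric, either convention gives the same diagonal.
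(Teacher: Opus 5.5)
Your proposal is correct and follows the paper's proof. The mean condition carries over unchanged from Lemma~\ref{lemma:optimal_condition_full}. The variance condition comes from $d\hat\Sigma = \text{diag}(d\pvvh)$, which makes the gradient with respect to $\pvvh$ the diagonal of the full-matrix gradient in Appendix~\ref{app:derivative-wrt-pcmv}, followed by the same second-moment rewriting; your explicit treatment of $\ncmvh$ as fixed and of commuting $\text{diag}$ with the expectation only spells out what the paper leaves implicit.
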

\begin{proof}
    The parameterization $\pcmh = \text{diag}(\pvvh)$, does not change the derivation of Eq. \eqref{eq:stationarity-u} in Lemma~\ref{lemma:optimal_condition_full}.
    Equation \eqref{eq:stationarity-cov-diagonal} results from zeroing the derivation of the loss w.r.t. $\pvvh$ (see Appendix \ref{sec:appendix_diagonal_para}). 
\end{proof}

Lemmas \ref{lemma:optimal_condition_full}
and \ref{lemma:optimal_condition_diagonal}
provide necessary optimality conditions for the estimators $\hat u(v)$ and $\pcmh$ or $\pvvh$. Even if the sought-after posterior mean and covariances are among the set of solutions, it is difficult to exclude other unwanted solutions,
given that the variance estimators appear as part of the matrix $\hat M$.
However for the specific degradation used in \eqref{eq:degradation-model-grid-shifting-z} $A_\tau = DS_\tau$, one can show that the unique solutions are the true posterior mean and variances.

\paragraph{Subsampling matrix \( A_\tau \).}
In our self-supervised super-resolution setting, we consider high-resolution images of size $2H\times 2W$, and a specific $2\times$ subsampling matrix $D$ such that $(Du)_\bl = u_{2\bl}$, where $\bl \in \llbracket 0,H-1\rrbracket\times \llbracket0,W-1\rrbracket$. Let $S_\tau$ ($\tau\sim \mathcal U\{0,1\}^2$) be a random shift operator that shifts the high resolution image by one pixel in a random direction prior to subsampling. We then denote the combined operation as $(A_\tau u)_\bl = (DS_\tau u)_\bl = u_{2\bl + \tau}$.
$A_\tau$ is a $HW\times 4HW$ matrix where each row has all elements zero except for a single element which is one. 
\begin{proposition}[Diagonal covariance and subsampling degradation]
    \label{prop:optimal-estimators-for-random-subsampling-matrix}
    Let $A_\tau$ be the random subsampling matrix given as defined above. Assume that the observation is corrupted by additive signal-dependent noise, and that the estimated noise covariances are exact at the diagonal, i.e. $\text{diag}(\ncmvh) = \text{diag}(\ncmv)$, for all $v,\tau$. Then, solving the stationarity conditions~\eqref{eq:stationarity-u} and \eqref{eq:stationarity-cov-diagonal} leads to the following optimal estimators:
    \begin{equation}\label{eq:diag_optimal_estimate}
        \hat u(v) = \mathbb E[u|v],\quad\quad
        \pvvh = \text{diag}(\Sigma(v)).
    \end{equation}
\end{proposition}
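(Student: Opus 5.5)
The plan is to exploit the structure of $A_\tau = DS_\tau$. Its rows are distinct canonical basis vectors, so $A_\tau A_\tau^T = I_{HW}$. Moreover, for any diagonal $\Lambda = \text{diag}(\lambda)$ the product $A_\tau \Lambda A_\tau^T$ is the diagonal matrix $\text{diag}(\lambda_{2\bl+\tau})_\bl$.

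\textbf{Step 1: $\hat M$ is diagonal.} I take $\ncmvh$ to be diagonal, which is the case for the affine noise model used in the paper. In fact only its diagonal will matter, since the loss will be seen to involve only diagonal entries when restricted to the diagonal parameterization. Then $\acah = \text{diag}\big((\pvvh_{2\bl+\tau} + (\ncmvh)_{\bl\bl})^{-1}\big)_\bl$ has strictly positive diagonal entries.

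\textbf{Step 2: the mean.} For any diagonal $\Lambda$ on the LR grid, $A_\tau^T \Lambda A_\tau$ is the $4HW\times 4HW$ diagonal matrix that places $\Lambda_{\bl\bl}$ at pixel $2\bl+\tau$ and zeros elsewhere. Averaging over $\tau\sim\mathcal U\{0,1\}^2$, each HR pixel $\bk$ is hit by exactly one $\tau$, namely $\tau = \bk \bmod 2$. Hence $\E_{\tau|v}[A_\tau^T\acah A_\tau]$ is diagonal with entries $\tfrac14 (\pvvh_\bk + (\ncmvh)_{\lfloor \bk/2\rfloor})^{-1} > 0$. This matrix is invertible, so \eqref{eq:stationarity-u} forces $\hat u(v) = \E[u|v]$.

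\textbf{Step 3: the variance.} With $\hat u(v)=\E[u|v]$ the inner matrix in the definition of $\aca$ becomes $\pcm$, so $\aca^{-1} = A_\tau\pcm A_\tau^T + \ncmv$. This matrix need not be diagonal, but $\acah$ is diagonal. Therefore
\[
\text{diag}(\acah \aca^{-1}\acah)_\bl = \hat m_\bl^2\,\big(\Sigma(v)_{2\bl+\tau,2\bl+\tau} + (\ncmv)_{\bl\bl}\big),
\]
where $\hat m_\bl$ denotes the $\bl$-th diagonal entry of $\acah$.

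The step I expect to need most care is commuting $\text{diag}$ with $A_\tau^T(\cdot)A_\tau$. Since $A_\tau$ is a selection matrix, $\text{diag}(A_\tau^T B A_\tau)$ equals the diagonal of $B$ scattered to the pixels $2\bl+\tau$, for any matrix $B$. So only diagonal entries of $B$ enter.

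Evaluating \eqref{eq:stationarity-cov-diagonal} at HR pixel $\bk$, again only $\tau=\bk\bmod 2$ contributes, giving
\[
\tfrac14\big(\hat m - \hat m^2(\Sigma(v)_{\bk\bk} + (\ncmv)_{\bl\bl})\big) = 0 .
\]
Since $\hat m>0$, this yields $\pvvh_\bk + (\ncmvh)_{\bl\bl} = \Sigma(v)_{\bk\bk} + (\ncmv)_{\bl\bl}$. The assumption $\text{diag}(\ncmvh)=\text{diag}(\ncmv)$ then gives $\pvvh_\bk = \Sigma(v)_{\bk\bk}$, which is \eqref{eq:diag_optimal_estimate}.

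The solution is unique, so it is the minimizer provided a minimizer exists. Existence holds because, per pixel, the objective $x\mapsto \tfrac12(c/x + \log x)$ is coercive on $x>0$ when $c>0$.
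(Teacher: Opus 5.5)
Your proposal is correct and follows essentially the same route as the paper. The shared steps are: $\hat M$ is diagonal because $\hat R_\tau(v)$ is; $\sum_\tau A_\tau^T \hat M A_\tau$ is invertible because each HR pixel is hit by exactly one shift; and $\text{diag}(\hat M M^{-1}\hat M)$ involves only the diagonal of $M^{-1}$. Your coercivity remark on existence is a small addition the paper omits. One caveat: your aside that only $\text{diag}(\hat R_\tau(v))$ matters is not true in general, since a non-diagonal $\hat R_\tau(v)$ makes $\hat M$ non-diagonal and the loss then involves off-diagonal entries. Like the paper, however, you do in fact take $\hat R_\tau(v)$ diagonal, so the proof itself is unaffected.
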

\begin{proof}
The expectation with respect to $\tau$ in both \eqref{eq:stationarity-u} and \eqref{eq:stationarity-cov-diagonal} corresponds to a sum over the four possible shifts, yielding 
\begin{align}
    \frac{1}{4}\sum_{\tau\in\{0,1\}^2}\left[A_\tau^T \acah A_\tau\right] \left(\hat u(v) - \mathbb E[u|v]\right) = 0 \label{eq:station-eq-sum-u} \\
\frac14 \sum_{\tau\in\{0,1\}^2} \text{diag}(A_\tau^T(\acah - \acah \aca^{-1}\acah)A_\tau) = 0.
\label{eq:expectation-as-sum}
\end{align}

We begin with equation~\eqref{eq:expectation-as-sum}.
For \(\tau \in \{0,1\}^2\), let $\mathcal{J}_\tau$ denote the set of indices of all columns of  $A_\tau$ that contain at least one nonzero element.  Because each pixel index $k$ is sampled by (only) one of the $A_\tau$, $\mathcal{J}_\tau$s are disjoint and their union covers the entire index set (i.e. the entire HR domain).   For any sampled pixel $k \in \mathcal{J}_\tau$, let $l = l(k)$ denote the corresponding index in the subsampled space. Note that $k$ and $l$ are indices, and their corresponding 2D coordinates, denoted as $\bk$ and $\bl$, satisfy $\bk = 2\bl + \tau$.

For a $HW\times HW$ matrix $Q$, the projection \(A_\tau^T Q A_\tau\)  
``upsamples'' the matrix by inserting zeros for the rows and columns outside $\mathcal J_\tau$.
This preserves the diagonal correspondence such that 
\[ 
    [A_\tau^T Q A_\tau]_{kk} = \begin{cases} Q_{ll} & \text{if } k \in \mathcal{J}_\tau \\ 0 & \text{otherwise.} \end{cases}
\]
Consequently, each diagonal element of the larger matrix (high resolution) is determined by only one $A_\tau$. 

Based on our assumptions on the noise $\acah = (A_\tau \text{diag}(\pvvh) A_\tau^T + \hat R(v))^{-1}$ is a diagonal matrix with entries \( \acah_{ll} = 1/(\pvvh_k + \ncmvh_{ll}) \).
Therefore, the sum $\sum_{\tau\in\{0,-1\}^2}\left[A_\tau^T \acah A_\tau\right]$ gives a diagonal matrix with non-zero elements, and thus \eqref{eq:station-eq-sum-u} holds iff $\hat u(v) = \mathbb E[u|v]$. Note that for $\hat u(v) = \mathbb E[u|v]$, $\aca = (A_\tau\pcm A_\tau^T + \ncmv)^{-1}$.

For the same reason, equation \eqref{eq:expectation-as-sum} is verified iff
the diagonal elements of the inner terms are zero, i.e. $\text{diag}(\acah - (\acah \aca^{-1} \acah)) = 0 $ for all $\tau$. 

Furthermore, although \(\aca^{-1} \) is dense, the  diagonal of \(\acah \aca^{-1} \acah\) does not involve the off-diagonal terms of \(\aca^{-1}\) because of the diagonal nature of \(\acah\). Therefore,
\begin{equation}\label{eq:diag_ele_part2}    
    (\acah \aca^{-1} \acah)_{ll} =  \frac{\pcm_{kk} + \ncmv_{ll}}{(\pvvh_k + \ncmvh_{ll})^2},
\end{equation}
where \(\pcm_{kk}\) and \(\ncmv_{ll}\) represent the posterior variance at the sampled pixel indexed $k$ and noise level at the subsampled position $l$. 
Thus, solving for $\pvvh_k$ we have
\begin{equation}
    \pvvh_k = \pcm_{kk} + \ncmv_{ll} - \ncmvh_{ll} = \pcm_{kk}, \forall k\in \mathcal{J}_\tau.
\end{equation}
Considering all $\tau \in \{0,1\}^2$ completely defines $\pvvh$.
\end{proof}

\begin{remark}
    Proposition 
    \ref{prop:optimal-estimators-for-random-subsampling-matrix}
    shows that the optimal estimator $\pvvh$ is the aleatoric uncertainty $\text{diag}(\Sigma(v))$. This relies on the accurate estimation of $\E[u|v]$ by $\hat u(v)$. In practice, due to the limited training and network capacity, $\hat u(v)$ will have errors. In this case, the optimal $\pvvh$ consists of the aleatoric uncertainty and the squared bias, i.e.
    \[\pvvh_k = \Sigma(v)_{kk} + (\hat u(v)_k - \E[u_k|v])^2,\]
    thus it gives the expected error or point-wise risk~\citep{lahlou2023deup}, 
    which is informative for the downstream tasks. 
\end{remark}

The previous result requires an accurate estimation the variance $\text{diag}(\ncmv)$ of the observation noise. During training $\text{diag}(\ncmv)$ is used to correct the per-pixel variances $\pvvh$ for computing the NLL loss. Without this correction, the network would learn to output uncorrected variances \(\text{diag}\left(\pcm + \sum_\tau A_\tau^T\ncmv A_\tau\right),\)
overestimating the posterior variance by the noise variance.
The correction could be done during inference by subtracting $\sum_\tau A_\tau^T\ncmv A_\tau$ 
but this could result in negative variances.

\begin{table}[t!]
  \centering
  \caption{Evaluation of the estimates for both reconstruction and uncertainty estimate.}
  \begin{tabular}{@{}lcc@{}}
    \toprule
    Methods  & self-sup.  & supervised  \\
    \midrule
    $\hat \mu(v)$ PSNR & 54.05    & 54.24  \\
    $\hat \nu(v)$ V-RMSE ($\times10^{-5}$) & 2.50      & 2.51      \\
    $\hat \nu(v)$ sharpness ($\times10^{-3}$) &  5.29         &  5.81        \\
    \bottomrule
  \end{tabular}
  \label{tab:evaluation}
\end{table}

\begin{table}[t!]
  \centering 
  \caption{Comparison of uncertainty estimates for pixels from different positions. For each metric we show four values corresponding to the pixels in the four sets: top left, top right, bottom left, bottom right. The estimates for the top-left pixel has the smallest RMSE and the smallest estimated $\hat \nu$ in all four methods. Pixels from this position are used to generate LR samples.
  }
  \begin{tabular}{l|cc|cc} 
    \toprule
    Metrics & \multicolumn{2}{c |}{self-sup.} & \multicolumn{2}{c}{supervised} \\
    \toprule
    
    \multirow{2}{*}{V-RMSE ($\times10^{-5}$)}    & 0.79  & 1.90  & 0.86  & 1.87  \\
                                & 1.80  & 1.75  & 2.04  & 1.71 \\\midrule
    \multirow{2}{*}{ECE}                        & 0.060 & 0.041 & 0.021 & 0.023 \\
                                                & 0.038 & 0.032 & 0.024 & 0.027 \\\midrule
    \multirow{2}{*}{Sharpness ($\times10^{-3}$)} & 4.22  & 5.49  & 5.04  & 6.00  \\
                                                & 5.81  & 5.63  & 6.31  & 5.89 \\\midrule
    \multirow{2}{*}{RMSE ($\times10^{-3}$)}     & 1.26  & 1.53  & 1.21  & 1.47  \\
                                & 1.60  & 1.46  & 1.54  & 1.39 \\
    \bottomrule
  \end{tabular}
  \label{tab:uncer-metric-by-position}
\end{table}

\section{Experiments}
\label{sec:results}

For the experiments, we adopt the model architecture from~\citet{nguyen2022self}, as described in Section \ref{sec:method}. We modify the decoder to output an additional channel for uncertainty estimates. To demonstrate that the self-supervised method produces meaningful uncertainty estimates, we train it on a synthetic dataset (where we have the HR ground truth), and compare its results with those from the same model but trained with the Gaussian NLL loss \eqref{eq:sup-gaussian-nll} with supervision from the HR ground truth.

We use the same synthetic dataset generated from L1B products as in~\citet{nguyen2022self}. 
This simulated dataset treats SkySat L1B products as ground-truth HR images. From each HR patch, a sequence of low-resolution observations is generated by first applying random sub-pixel translations and then performing a $\times 2$ subsampling operation (without blur) to mimic the sampling process of the SkySat push-frame sensor. Each frame in the sequence is also assigned a simulated exposure time $e_i = \gamma^{c_i}$, where $c_i$ is uniformly sampled from $\{-5,\dots,5\}$, and $\gamma \sim\mathcal{U}(1.2, 1.4)$ simulating a realistic range of brightness variations.
During the training, additive white Gaussian noise with standard dev. between 5 and 18 is added to the LR observations. This yields a sequence of shifted noisy LR observations paired with the original L1B HR image as the supervision target. 
An LR multi-exposure burst is shown in Figure \ref{fig:comparison_reconstruction}.

\subsection{Quantitative evaluation}

As discussed in Section~\ref{sec:optimal_estimator}, the optimal estimators for applying the Gaussian NLL loss are $\hat{u}(v)_k = \mathbb{E}_{u\mid v}[u_k]$ for restorations and $\pvvh_k =\mathbb{E}_{u\mid v}[(u_k - \hat{u}(v)_k)^2] $ for uncertainty estimates. We use the PSNR metric to evaluate the restoration quality of the proposed method. Since the uncertainty estimates are the expected squared difference between the ground truth and the restoration, we  compare the uncertainty estimates with the actual squared prediction errors,
via the following \emph{variance RMSE}:
\[ 
\text{V-RMSE} = \sqrt{  \frac{1}{4HW} \sum^{4HW}_{k=1} \left(\hat \nu_k - (\hat u(v)_k - u_k)^2\right)^2}. \]
V-RMSE serves as a metric for the quality of uncertainty estimates. 
Table~\ref{tab:evaluation} shows the quantitative results for both reconstruction and uncertainty estimates.

\paragraph{Coverage under a Gaussian probabilistic model.}
A common way to evaluate uncertainty is via a coverage test, which defines confidence intervals for a series of nominal confidence levels $\alpha$, and compares the empirical coverage rates with the predicted coverage $\alpha$.
The definition of the confidence interval relies on assuming a Gaussian probabilistic model for the posterior distribution, i.e. 
$ u |v \sim \mathcal{N} (\hat \mu(v), \Sigma(v)).$ 
The empirical coverage is computed as the proportion of pixels in the test set for which the ground-truth value falls inside the predicted interval. We plot the coverage test results in Figure~\ref{fig:comparison_reconstruction} (right). We also report the sharpness of the intervals, which is the average length of the $90\%$ confidence intervals, see Table~\ref{tab:evaluation}. 
We can see in Table~\ref{tab:evaluation} that the supervised model achieves higher PSNR (0.2dB) on the reconstructed HR image. This is consistent with the results obtained in \citep{nguyen2022self}. 
In terms of variance RMSE both models are comparable. The sharpness and the coverage plot seem to suggest that the self-supervised variance slightly underestimates the true posterior variance.

\subsection{Qualitative evaluation}

The uncertainty estimates map in Figure~\ref{fig:comparison_reconstruction} (center) shows a checkerboard effect with a period of two, which inspires to investigate the estimates at 4 different subgrids (corresponding to the four downsampling patterns indicated by the colors in the super-resolved image of Figure \ref{fig:data_flow}). 

We compare the quality of the uncertainty estimates for these four subgrids. We show the resulting metrics in Table~\ref{tab:uncer-metric-by-position}. 
To summarize the calibration curve, we report the Calibration Error (CE)~\citep{kuleshov2018accurate,chung2021uncertainty} which is computed as 
\(\text{CE} = \frac{1}{m}\sum_{j=1}^m |p_j - \hat p_j|\), for each pair of nominal and actual confidence level \(p_j, \hat p_j\).
Coverage plots for the 4 subsets of pixels (see supplementary) show no signs of deviations from  the average coverage curve.
We see that pixels from the top-left have the smallest RMSE, meaning they match the actual squared prediction error the best. In addition, the estimated $\pvvh$ (predicted squared error) is also the smallest among the four positions.

The reason for this comes from the fact that during inference, the reference frame is not removed from the input (as it was done during training). This additional datum provides valuable information about the even pixels of the HR frame, as it is a direct (noisy) observation of those pixels, which creates a reduction in the MSE for those pixels.
Interestingly, the variance computed by the network captured this effect even with the self-supervised training.

\section{Conclusion}\label{sec:conclusion}

We proposed a self-supervised version of the Gaussian NLL loss for estimation of the aleatoric uncertainty in image restoration problems where the available observations are degraded with a linear degradation operator and contaminated by noise. 
We study the minimizers of the associated risk, and derive optimality conditions for general linear degradations, and for the specific case of super-resolution with a randomized subsampling operator. 
We justify the validity of the proposed loss by showing that
the posterior mean and variance are among the optimal estimators (in the general linear degradation case), and are the unique minimizers for the randomized subsampling operator, thus showing the equivalence with the supervised Gaussian NLL loss.
Finally, we apply this loss to the satellite image super-resolution task on a synthetic satellite image dataset simulated based on L1B~\citep{nguyen2022self}, demonstrating obtaining a performance comparable with supervised training.

Future work should explore the application of the proposed loss to other inverse problems. The challenge here is that in the general linear case the posterior mean and variance are not the unique minimizers.

{\small
\noindent\textbf{Acknowledgments. }
Project PID2024-162897NA-I00 funded by MICIU/AEI/10.13039/501100011033/ FEDER, UE.
This work was performed using HPC resources from GENCI-IDRIS
(grants 2023-AD011011801R3, 2023-AD011012453R2,
2023-AD011012458R2) and from the “Mésocentre” computing center of CentraleSupélec and ENS Paris-Saclay
supported by CNRS and Région Île-de-France (http://mesocentre.centralesupelec.fr/).
Centre
Borelli is also with Université Paris Cité, SSA and INSERM.}

\bibliography{refs}

\begin{thebibliography}{52}
\providecommand{\natexlab}[1]{#1}
\providecommand{\url}[1]{\texttt{#1}}
\expandafter\ifx\csname urlstyle\endcsname\relax
  \providecommand{\doi}[1]{doi: #1}\else
  \providecommand{\doi}{doi: \begingroup \urlstyle{rm}\Url}\fi

\bibitem[Aggarwal et~al.(2022)Aggarwal, Pramanik, John, and
  Jacob]{aggarwal2022ensure}
Hemant~Kumar Aggarwal, Aniket Pramanik, Maneesh John, and Mathews Jacob.
\newblock Ensure: A general approach for unsupervised training of deep image
  reconstruction algorithms.
\newblock \emph{IEEE transactions on medical imaging}, 42\penalty0
  (4):\penalty0 1133--1144, 2022.

\bibitem[Aiyetigbo et~al.(2024)Aiyetigbo, Korte, Anderson, Chalhoub, Kalivas,
  Luo, and Li]{aiyetigbo2024unsupervised}
Mary Aiyetigbo, Alexander Korte, Ethan Anderson, Reda Chalhoub, Peter Kalivas,
  Feng Luo, and Nianyi Li.
\newblock Unsupervised microscopy video denoising.
\newblock In \emph{Proceedings of the IEEE/CVF Conference on Computer Vision
  and Pattern Recognition}, pages 6874--6883, 2024.

\bibitem[Altekr{\"u}ger and Hertrich(2023)]{altekruger2023wppnets}
Fabian Altekr{\"u}ger and Johannes Hertrich.
\newblock Wppnets and wppflows: The power of wasserstein patch priors for
  superresolution.
\newblock \emph{SIAM Journal on Imaging Sciences}, 16\penalty0 (3):\penalty0
  1033--1067, 2023.

\bibitem[Angelopoulos et~al.(2022)Angelopoulos, Kohli, Bates, Jordan, Malik,
  Alshaabi, Upadhyayula, and Romano]{angelopoulos2022image}
Anastasios~N Angelopoulos, Amit~Pal Kohli, Stephen Bates, Michael Jordan,
  Jitendra Malik, Thayer Alshaabi, Srigokul Upadhyayula, and Yaniv Romano.
\newblock Image-to-image regression with distribution-free uncertainty
  quantification and applications in imaging.
\newblock In \emph{International Conference on Machine Learning}, pages
  717--730. PMLR, 2022.

\bibitem[Batson and Royer(2019)]{batson2019noise2self}
Joshua Batson and Loic Royer.
\newblock Noise2self: Blind denoising by self-supervision.
\newblock In \emph{International conference on machine learning}, pages
  524--533. PMLR, 2019.

\bibitem[Bhat et~al.(2023)Bhat, Gharbi, Chen, Van~Gool, and Xia]{bhat2023self}
Goutam Bhat, Micha{\"e}l Gharbi, Jiawen Chen, Luc Van~Gool, and Zhihao Xia.
\newblock Self-supervised burst super-resolution.
\newblock In \emph{Proceedings of the IEEE/CVF international conference on
  computer vision}, pages 10605--10614, 2023.

\bibitem[Bickford~Smith et~al.(2025)Bickford~Smith, Kossen, Trollope, Van
  Der~Wilk, Foster, and Rainforth]{bickford2025rethinking}
Freddie Bickford~Smith, Jannik Kossen, Eleanor Trollope, Mark Van Der~Wilk,
  Adam Foster, and Tom Rainforth.
\newblock Rethinking aleatoric and epistemic uncertainty.
\newblock In \emph{Proceedings of the 42nd International Conference on Machine
  Learning}, volume 267 of \emph{Proceedings of Machine Learning Research},
  pages 4345--4359. PMLR, 13--19 Jul 2025.
\newblock URL \url{https://proceedings.mlr.press/v267/bickford-smith25a.html}.

\bibitem[Bishop(1994)]{bishop1994mixture}
Christopher~M Bishop.
\newblock Mixture density networks.
\newblock 1994.

\bibitem[Blundell et~al.(2015)Blundell, Cornebise, Kavukcuoglu, and
  Wierstra]{blundell2015weight}
Charles Blundell, Julien Cornebise, Koray Kavukcuoglu, and Daan Wierstra.
\newblock Weight uncertainty in neural network.
\newblock In \emph{International conference on machine learning}, pages
  1613--1622. PMLR, 2015.

\bibitem[Braverman et~al.(2021)Braverman, Hobbs, Teixeira, and
  Gunson]{braverman2021post}
Amy Braverman, Jonathan Hobbs, Joaquim Teixeira, and Michael Gunson.
\newblock Post hoc uncertainty quantification for remote sensing observing
  systems.
\newblock \emph{SIAM/ASA Journal on Uncertainty Quantification}, 9\penalty0
  (3):\penalty0 1064--1093, 2021.

\bibitem[Chen et~al.(2021)Chen, Tachella, and Davies]{chen2021equivariant}
Dongdong Chen, Juli\'an Tachella, and Mike~E Davies.
\newblock Equivariant imaging: Learning beyond the range space.
\newblock In \emph{IEEE/CVF International Conference on Computer Vision
  (ICCV)}, 2021.

\bibitem[Chung et~al.(2023)Chung, Kim, Mccann, Klasky, and
  Ye]{chung2023diffusion}
Hyungjin Chung, Jeongsol Kim, Michael~Thompson Mccann, Marc~Louis Klasky, and
  Jong~Chul Ye.
\newblock Diffusion posterior sampling for general noisy inverse problems.
\newblock In \emph{The Eleventh International Conference on Learning
  Representations}, 2023.
\newblock URL \url{https://openreview.net/forum?id=OnD9zGAGT0k}.

\bibitem[Chung et~al.(2021)Chung, Char, Guo, Schneider, and
  Neiswanger]{chung2021uncertainty}
Youngseog Chung, Ian Char, Han Guo, Jeff Schneider, and Willie Neiswanger.
\newblock Uncertainty toolbox: an open-source library for assessing,
  visualizing, and improving uncertainty quantification.
\newblock \emph{arXiv preprint arXiv:2109.10254}, 2021.

\bibitem[Daras et~al.(2023)Daras, Shah, Dagan, Gollakota, Dimakis, and
  Klivans]{daras2023ambient}
Giannis Daras, Kulin Shah, Yuval Dagan, Aravind Gollakota, Alex Dimakis, and
  Adam Klivans.
\newblock Ambient diffusion: Learning clean distributions from corrupted data.
\newblock \emph{Advances in Neural Information Processing Systems},
  36:\penalty0 288--313, 2023.

\bibitem[Dewil et~al.(2021)Dewil, Anger, Davy, Ehret, Facciolo, and
  Arias]{dewil2021self}
Val{\'e}ry Dewil, J{\'e}r{\'e}my Anger, Axel Davy, Thibaud Ehret, Gabriele
  Facciolo, and Pablo Arias.
\newblock Self-supervised training for blind multi-frame video denoising.
\newblock In \emph{Proceedings of the IEEE/CVF winter conference on
  applications of computer vision}, pages 2724--2734, 2021.

\bibitem[Dorta et~al.(2018)Dorta, Vicente, Agapito, Campbell, and
  Simpson]{dorta2018structured}
Garoe Dorta, Sara Vicente, Lourdes Agapito, Neill~DF Campbell, and Ivor
  Simpson.
\newblock Structured uncertainty prediction networks.
\newblock In \emph{Proceedings of the IEEE conference on computer vision and
  pattern recognition}, pages 5477--5485, 2018.

\bibitem[Ebel et~al.(2023)Ebel, Garnot, Schmitt, Wegner, and
  Zhu]{ebel2023uncrtaints}
Patrick Ebel, Vivien Sainte~Fare Garnot, Michael Schmitt, Jan~Dirk Wegner, and
  Xiao~Xiang Zhu.
\newblock Uncrtaints: Uncertainty quantification for cloud removal in optical
  satellite time series.
\newblock In \emph{Proceedings of the IEEE/CVF Conference on Computer Vision
  and Pattern Recognition}, pages 2086--2096, 2023.

\bibitem[Ehret et~al.(2019{\natexlab{a}})Ehret, Davy, Arias, and
  Facciolo]{ehret2019joint}
Thibaud Ehret, Axel Davy, Pablo Arias, and Gabriele Facciolo.
\newblock Joint demosaicking and denoising by fine-tuning of bursts of raw
  images.
\newblock In \emph{Proceedings of the ieee/cvf international conference on
  computer vision}, pages 8868--8877, 2019{\natexlab{a}}.

\bibitem[Ehret et~al.(2019{\natexlab{b}})Ehret, Davy, Morel, Facciolo, and
  Arias]{ehret2019model}
Thibaud Ehret, Axel Davy, Jean-Michel Morel, Gabriele Facciolo, and Pablo
  Arias.
\newblock Model-blind video denoising via frame-to-frame training.
\newblock In \emph{Proceedings of the IEEE/CVF conference on computer vision
  and pattern recognition}, pages 11369--11378, 2019{\natexlab{b}}.

\bibitem[Gal and Ghahramani(2015)]{gal2015bayesian}
Yarin Gal and Zoubin Ghahramani.
\newblock Bayesian convolutional neural networks with bernoulli approximate
  variational inference.
\newblock \emph{arXiv preprint arXiv:1506.02158}, 2015.

\bibitem[Gal and Ghahramani(2016)]{gal2016dropout}
Yarin Gal and Zoubin Ghahramani.
\newblock Dropout as a bayesian approximation: Representing model uncertainty
  in deep learning.
\newblock In \emph{international conference on machine learning}, pages
  1050--1059. PMLR, 2016.

\bibitem[Gawlikowski et~al.(2023)Gawlikowski, Tassi, Ali, Lee, Humt, Feng,
  Kruspe, Triebel, Jung, Roscher, et~al.]{gawlikowski2023survey}
Jakob Gawlikowski, Cedrique Rovile~Njieutcheu Tassi, Mohsin Ali, Jongseok Lee,
  Matthias Humt, Jianxiang Feng, Anna Kruspe, Rudolph Triebel, Peter Jung,
  Ribana Roscher, et~al.
\newblock A survey of uncertainty in deep neural networks.
\newblock \emph{Artificial Intelligence Review}, 56\penalty0 (Suppl
  1):\penalty0 1513--1589, 2023.

\bibitem[Gruber et~al.(2025)Gruber, Schenk, Schierholz, Kreuter, and
  Kauermann]{gruber2025sources}
Cornelia Gruber, Patrick~Oliver Schenk, Malte Schierholz, Frauke Kreuter, and
  G{\"o}ran Kauermann.
\newblock Sources of uncertainty in supervised machine learning--a
  statisticians’ view.
\newblock \emph{arXiv preprint ArXiv:2305.16703}, 2025.

\bibitem[Haas and Rabus(2021)]{haas2021uncertainty}
Jarrod Haas and Bernhard Rabus.
\newblock Uncertainty estimation for deep learning-based segmentation of roads
  in synthetic aperture radar imagery.
\newblock \emph{Remote Sensing}, 13\penalty0 (8):\penalty0 1472, 2021.

\bibitem[Hendriksen et~al.(2020)Hendriksen, Pelt, and
  Batenburg]{hendriksen2020noise2inverse}
Allard~Adriaan Hendriksen, Dani{\"e}l~Maria Pelt, and K~Joost Batenburg.
\newblock Noise2inverse: Self-supervised deep convolutional denoising for
  tomography.
\newblock \emph{IEEE Transactions on Computational Imaging}, 6:\penalty0
  1320--1335, 2020.

\bibitem[H{\"u}llermeier and Waegeman(2021)]{hullermeier2021aleatoric}
Eyke H{\"u}llermeier and Willem Waegeman.
\newblock Aleatoric and epistemic uncertainty in machine learning: An
  introduction to concepts and methods.
\newblock \emph{Machine learning}, 110\penalty0 (3):\penalty0 457--506, 2021.

\bibitem[Kawar et~al.(2022)Kawar, Elad, Ermon, and Song]{kawar2022denoising}
Bahjat Kawar, Michael Elad, Stefano Ermon, and Jiaming Song.
\newblock Denoising diffusion restoration models.
\newblock \emph{Advances in neural information processing systems},
  35:\penalty0 23593--23606, 2022.

\bibitem[Kendall and Gal(2017)]{kendall2017uncertainties}
Alex Kendall and Yarin Gal.
\newblock What uncertainties do we need in bayesian deep learning for computer
  vision?
\newblock \emph{Advances in neural information processing systems}, 30, 2017.

\bibitem[Kim and Ye(2021)]{kim2021noise2score}
Kwanyoung Kim and Jong~Chul Ye.
\newblock Noise2score: tweedie’s approach to self-supervised image denoising
  without clean images.
\newblock \emph{Advances in Neural Information Processing Systems},
  34:\penalty0 864--874, 2021.

\bibitem[Kirchhof et~al.(2025)Kirchhof, Kasneci, and
  Kasneci]{kirchhof2025reexamining}
Michael Kirchhof, Gjergji Kasneci, and Enkelejda Kasneci.
\newblock Reexamining the aleatoric and epistemic uncertainty dichotomy.
\newblock In \emph{The Fourth Blogpost Track at ICLR 2025}, 2025.

\bibitem[Krull et~al.(2019)Krull, Buchholz, and Jug]{krull2019noise2void}
Alexander Krull, Tim-Oliver Buchholz, and Florian Jug.
\newblock Noise2void-learning denoising from single noisy images.
\newblock In \emph{Proceedings of the IEEE/CVF conference on computer vision
  and pattern recognition}, pages 2129--2137, 2019.

\bibitem[Krull et~al.(2020)Krull, Vi{\v{c}}ar, Prakash, Lalit, and
  Jug]{krull2020probabilistic}
Alexander Krull, Tom{\'a}{\v{s}} Vi{\v{c}}ar, Mangal Prakash, Manan Lalit, and
  Florian Jug.
\newblock Probabilistic noise2void: Unsupervised content-aware denoising.
\newblock \emph{Frontiers in Computer Science}, 2:\penalty0 5, 2020.

\bibitem[Kuleshov et~al.(2018)Kuleshov, Fenner, and
  Ermon]{kuleshov2018accurate}
Volodymyr Kuleshov, Nathan Fenner, and Stefano Ermon.
\newblock Accurate uncertainties for deep learning using calibrated regression.
\newblock In \emph{International conference on machine learning}, pages
  2796--2804. PMLR, 2018.

\bibitem[Lahlou et~al.(2023)Lahlou, Jain, Nekoei, Butoi, Bertin, Rector-Brooks,
  Korablyov, and Bengio]{lahlou2023deup}
Salem Lahlou, Moksh Jain, Hadi Nekoei, Victor~I Butoi, Paul Bertin, Jarrid
  Rector-Brooks, Maksym Korablyov, and Yoshua Bengio.
\newblock {DEUP}: Direct epistemic uncertainty prediction.
\newblock \emph{Transactions on Machine Learning Research}, 2023.
\newblock ISSN 2835-8856.
\newblock URL \url{https://openreview.net/forum?id=eGLdVRvvfQ}.
\newblock Expert Certification.

\bibitem[Laine et~al.(2019)Laine, Karras, Lehtinen, and Aila]{laine2019high}
Samuli Laine, Tero Karras, Jaakko Lehtinen, and Timo Aila.
\newblock High-quality self-supervised deep image denoising.
\newblock \emph{Advances in neural information processing systems}, 32, 2019.

\bibitem[Lakshminarayanan et~al.(2017)Lakshminarayanan, Pritzel, and
  Blundell]{lakshminarayanan2017simple}
Balaji Lakshminarayanan, Alexander Pritzel, and Charles Blundell.
\newblock Simple and scalable predictive uncertainty estimation using deep
  ensembles.
\newblock \emph{Advances in neural information processing systems}, 30, 2017.

\bibitem[Lehtinen et~al.(2018)Lehtinen, Munkberg, Hasselgren, Laine, Karras,
  Aittala, and Aila]{lehtinen2018noise2noise}
Jaakko Lehtinen, Jacob Munkberg, Jon Hasselgren, Samuli Laine, Tero Karras,
  Miika Aittala, and Timo Aila.
\newblock Noise2noise: Learning image restoration without clean data.
\newblock \emph{arXiv preprint arXiv:1803.04189}, 2018.

\bibitem[Metzler et~al.(2018)Metzler, Mousavi, Heckel, and
  Baraniuk]{metzler2018unsupervised}
Christopher~A Metzler, Ali Mousavi, Reinhard Heckel, and Richard~G Baraniuk.
\newblock Unsupervised learning with stein's unbiased risk estimator.
\newblock \emph{arXiv preprint arXiv:1805.10531}, 2018.

\bibitem[Millard and Chiew(2023)]{millard2023theoretical}
Charles Millard and Mark Chiew.
\newblock A theoretical framework for self-supervised mr image reconstruction
  using sub-sampling via variable density noisier2noise.
\newblock \emph{IEEE transactions on computational imaging}, 9:\penalty0
  707--720, 2023.

\bibitem[Minka(2000)]{minka2000matrix}
Thomas~P Minka.
\newblock Old and new matrix algebra useful for statistics, December 2000.
\newblock URL \url{https://tminka.github.io/papers/matrix/}.

\bibitem[Miranda et~al.(2025)Miranda, Mena, and Dengel]{miranda2025analysis}
Miro Miranda, Francisco Mena, and Andreas Dengel.
\newblock An analysis of temporal dropout in earth observation time series for
  regression tasks.
\newblock In \emph{International Symposium on Intelligent Data Analysis}, pages
  389--402. Springer, 2025.

\bibitem[Moran et~al.(2020)Moran, Schmidt, Zhong, and
  Coady]{moran2020noisier2noise}
Nick Moran, Dan Schmidt, Yu~Zhong, and Patrick Coady.
\newblock Noisier2noise: Learning to denoise from unpaired noisy data.
\newblock In \emph{Proceedings of the IEEE/CVF conference on computer vision
  and pattern recognition}, pages 12064--12072, 2020.

\bibitem[Nguyen et~al.(2021)Nguyen, Anger, Davy, Arias, and
  Facciolo]{nguyen2021self}
Ngoc~Long Nguyen, J{\'e}r{\'e}my Anger, Axel Davy, Pablo Arias, and Gabriele
  Facciolo.
\newblock Self-supervised multi-image super-resolution for push-frame satellite
  images.
\newblock In \emph{Proceedings of the IEEE/CVF Conference on Computer Vision
  and Pattern Recognition}, pages 1121--1131, 2021.

\bibitem[Nguyen et~al.(2022)Nguyen, Anger, Davy, Arias, and
  Facciolo]{nguyen2022self}
Ngoc~Long Nguyen, J{\'e}r{\'e}my Anger, Axel Davy, Pablo Arias, and Gabriele
  Facciolo.
\newblock Self-supervised super-resolution for multi-exposure push-frame
  satellites.
\newblock In \emph{Proceedings of the IEEE/CVF Conference on Computer Vision
  and Pattern Recognition}, pages 1858--1868, 2022.

\bibitem[Nix and Weigend(1994)]{nix1994estimating}
David~A Nix and Andreas~S Weigend.
\newblock Estimating the mean and variance of the target probability
  distribution.
\newblock In \emph{Proceedings of 1994 ieee international conference on neural
  networks (ICNN'94)}, volume~1, pages 55--60. IEEE, 1994.

\bibitem[Pang et~al.(2021)Pang, Zheng, Quan, and Ji]{pang2021recorrupted}
Tongyao Pang, Huan Zheng, Yuhui Quan, and Hui Ji.
\newblock Recorrupted-to-recorrupted: Unsupervised deep learning for image
  denoising.
\newblock In \emph{Proceedings of the IEEE/CVF conference on computer vision
  and pattern recognition}, pages 2043--2052, 2021.

\bibitem[Soltanayev and Chun(2018)]{soltanayev2018training}
Shakarim Soltanayev and Se~Young Chun.
\newblock Training deep learning based denoisers without ground truth data.
\newblock \emph{Advances in neural information processing systems}, 31, 2018.

\bibitem[Tachella and Davies(2026)]{tachella2026self}
Juli{\'a}n Tachella and Mike Davies.
\newblock Self-supervised learning from noisy and incomplete data.
\newblock \emph{arXiv preprint arXiv:2601.03244}, 2026.

\bibitem[Tachella and Pereyra(2024)]{tachella2024equivariant}
Juli{\'a}n Tachella and Marcelo Pereyra.
\newblock Equivariant bootstrapping for uncertainty quantification in imaging
  inverse problems.
\newblock In \emph{International Conference on Artificial Intelligence and
  Statistics}, pages 4141--4149. PMLR, 2024.

\bibitem[Tachella et~al.(2024)Tachella, Davies, and
  Jacques]{tachella2024unsure}
Juli{\'a}n Tachella, Mike Davies, and Laurent Jacques.
\newblock Unsure: self-supervised learning with unknown noise level and stein's
  unbiased risk estimate.
\newblock \emph{arXiv preprint arXiv:2409.01985}, 2024.

\bibitem[Valdenegro-Toro and Mori(2022)]{valdenegro2022deeper}
Matias Valdenegro-Toro and Daniel~Saromo Mori.
\newblock A deeper look into aleatoric and epistemic uncertainty
  disentanglement.
\newblock In \emph{2022 IEEE/CVF Conference on Computer Vision and Pattern
  Recognition Workshops (CVPRW)}, pages 1508--1516. IEEE, 2022.

\bibitem[Valsesia and Magli(2021)]{valsesia2021permutation}
Diego Valsesia and Enrico Magli.
\newblock Permutation invariance and uncertainty in multitemporal image
  super-resolution.
\newblock \emph{IEEE Transactions on Geoscience and Remote Sensing},
  60:\penalty0 1--12, 2021.

\end{thebibliography}

\clearpage

\appendix
\section{Mathematical derivations}\label{sec:appendix}

\subsection{Derivative of NLL loss with respect to $\pcmh$}
\label{app:derivative-wrt-pcmv}

Proving Lemma~\ref{lemma:optimal_condition_full} requires computing the derivative of ``per-input risk'' $\mathcal R_v$ eq.~\eqref{eq:risk_full_conditional}  w.r.t. \(\pcmh\):
\begin{equation}
    \frac{\partial \mathcal{R}_v}{\partial \pcmh} = \mathbb{E}_{\tau,z\mid v} \left[\frac{\partial \mathcal{L}_{\mathrm{NLL}}}{\partial \pcmh}\right]
\end{equation}

Recall that $\acah = (A_\tau\pcmh A_\tau^T + \ncmvh)^{-1}$, and we denote $\mathcal{L}_{\mathrm{NLL}}$ as $\mathcal{L}$ for simplicity. We can express the differential \citep{minka2000matrix} of $\mathcal L$ using the chain rule as:
\begin{align*}
& d \mathcal{L} = \mathrm{Tr} \left(\frac{\partial \mathcal{L}}{\partial \acah}^T d\acah \right) \\
&\frac{\partial \mathcal{L}}{\partial \acah}
= (z - A_\tau \hat u(v))(z - A_\tau \hat u(v))^T - \acah^{-1} \\
& d\acah = -\acah A_\tau d\hat\Sigma A_\tau^T \acah
\end{align*}
The above equations lead to 
\begin{align} 
    d \mathcal{L} 
    &= \mathrm{Tr} \left(-\frac{\partial \mathcal{L}}{\partial \acah}^T \acah A_\tau d\hat\Sigma A_\tau^T \acah\right) \notag\\
    &=  \mathrm{Tr} \left(-A_\tau^T \acah\frac{\partial \mathcal{L}}{\partial \acah}^T \acah A_\tau d\hat\Sigma \right) \label{eq:differential_Sigma}
\end{align}
Thus, 
\begin{equation}
    \frac{\partial \mathcal{L}}{\partial \hat\Sigma} = \Big(-A_\tau^T \acah\frac{\partial \mathcal{L}}{\partial \acah}^T \acah A_\tau \Bigr)^T = -A_\tau^T \acah\frac{\partial \mathcal{L}}{\partial \acah} \acah A_\tau 
\end{equation}
Expanding the term $\frac{\partial \mathcal{L}}{\partial \acah}$, we have 
\[
\frac{\partial \mathcal{L}}{\partial \hat\Sigma} = A_\tau^T \acah A_\tau - A_\tau^T \acah (z - A_\tau \hat u(v))(z - A_\tau \hat u(v))^T \acah A_\tau.
\]

\subsection{Derivative of NLL loss with respect to $\pvvh$}
\label{sec:appendix_diagonal_para}

Denote $\acah = (A_\tau \text{diag}(\pvvh) A_\tau^T)^{-1}$. 
We consider the differential expression~\eqref{eq:differential_Sigma} under the diagonal parameterization $\pcmh=\text{diag}(\pvvh)$.
The following holds for the diagonal operator:
\begin{equation*}
    d\hat\Sigma = \text{diag}(d\pvvh),
\end{equation*}
which leads to
\begin{align} 
    d \mathcal{L} 
    &= \mathrm{Tr} \left(\frac{\partial \mathcal{L}}{\partial \hat\Sigma}^T \;d\hat\Sigma\right) \notag 
    = \mathrm{Tr} \left(\frac{\partial \mathcal{L}}{\partial \hat\Sigma}^T \;\text{diag}(d\pvvh)\right) \notag \\
    &= \text{diag}\left(\frac{\partial \mathcal{L}}{\partial \hat\Sigma}\right)^T \;d\pvvh.
\end{align}
In this last equation the diag operator is applied to a matrix and thus it acts by extracting its diagional as a column vector.
We conclude that 
\begin{equation}
    \frac{\partial \mathcal{L}}{\partial \pvvh} 
    = \text{diag}\left(\frac{\partial \mathcal{L}}{\partial \hat\Sigma} \right) 
    =\text{diag}(A_\tau^T (\acah  -  \acah \aca^{-1} \acah)A_\tau)
\end{equation}

\end{document}